\def\x{{x}}
\def\xi{{\x_i}}
\newcommand{\reffig}[1]{Figure~\ref{fig:#1}}
\newcommand{\refapp}[1]{Appendix~\ref{sec:#1}}
\newcommand{\refeq}[1]{Eqn.~\ref{eq:#1}}
\newcommand{\lblfig}[1]{\label{fig:#1}}
\newcommand{\lblsec}[1]{\label{sec:#1}}
\newcommand{\lbleq}[1]{\label{eq:#1}}
\newcommand{\ignorethis}[1]{}
\newcommand{\myparagraph}[1]{\smallskip \noindent \textbf{#1}}
\def\1{\bm{1}}
\DeclareMathAlphabet{\mathsfit}{\encodingdefault}{\sfdefault}{m}{sl}
\SetMathAlphabet{\mathsfit}{bold}{\encodingdefault}{\sfdefault}{bx}{n}
\newcommand{\Var}{\mathrm{Var}}
\newcommand{\Cov}{\mathrm{Cov}}
\newcolumntype{L}[1]{>{\raggedright\let\newline\\\arraybackslash\hspace{0pt}}m{#1}}
\newcolumntype{C}[1]{>{\centering\let\newline\\\arraybackslash\hspace{0pt}}m{#1}}
\newcolumntype{R}[1]{>{\raggedleft\let\newline\\\arraybackslash\hspace{0pt}}m{#1}}
\newcommand{\ignore}[1]{}
\DeclareRobustCommand\onedot{\futurelet\@let@token\@onedot}
\def\@onedot{\ifx\@let@token.\else.\null\fi\xspace}
\definecolor{MyDarkBlue}{rgb}{0,0.08,1}
\definecolor{MyDarkGreen}{rgb}{0.02,0.6,0.02}
\definecolor{MyDarkRed}{rgb}{0.8,0.02,0.02}
\definecolor{MyDarkOrange}{rgb}{0.40,0.2,0.02}
\definecolor{MyPurple}{RGB}{111,0,255}
\definecolor{MyRed}{rgb}{1.0,0.0,0.0}
\definecolor{MyGold}{rgb}{0.75,0.6,0.12}
\definecolor{MyDarkgray}{rgb}{0.66, 0.66, 0.66}
\begin{document}
\pagestyle{headings}
\mainmatter
\def\ECCVSubNumber{7231}  %

\title{The Hessian Penalty: A Weak Prior for Unsupervised Disentanglement}

\titlerunning{The Hessian Penalty}
\author{William Peebles\inst{1} \and
John Peebles\inst{2} \and
Jun-Yan Zhu\inst{3} \and \\ Alexei Efros\inst{1} \and Antonio Torralba\inst{4}}
\authorrunning{W. Peebles et al.}

\institute{University of California, Berkeley\inst{1} 
\; \; Yale University\inst{2} \\
Adobe Research\inst{3} \; \; MIT CSAIL\inst{4} \\
}

\maketitle

\begin{abstract}
Existing disentanglement methods for deep generative models rely on hand-picked priors and complex encoder-based architectures. In this paper, we propose the \textit{Hessian Penalty}, a simple regularization term that encourages the Hessian of a generative model with respect to its input to be diagonal. We introduce a model-agnostic, unbiased stochastic approximation of this term based on Hutchinson's estimator to compute it efficiently during training. Our method can be applied to a wide range of deep generators with just a few lines of code. We show that training with the Hessian Penalty often causes axis-aligned disentanglement to emerge in latent space when applied to ProGAN on several datasets. Additionally, we use our regularization term to identify interpretable directions in BigGAN's latent space in an unsupervised fashion. Finally, we provide empirical evidence that the Hessian Penalty encourages substantial shrinkage when applied to over-parameterized latent spaces. We encourage readers to view videos of our disentanglement results on our \href{http://www.wpeebles.com/hessian-penalty}{website} and access our code on \href{https://github.com/wpeebles/hessian_penalty}{GitHub}. %
\end{abstract}

\section{Introduction}
What does it mean to disentangle a function? While Yoshua Bengio has advocated for using ``broad generic priors'' to design disentanglement algorithms~\cite{bengio2013deep}, most recent disentanglement efforts end up being specific to the network architecture used~\cite{DBLP:conf/iclr/HigginsMPBGBML17,kingma2014auto,kim2018disentangling,zhu2018visual} and the types of variation present in datasets~\cite{chen2016infogan,nguyen2019hologan}. 

In this paper, we propose a notion of disentanglement that is simple and general, and can be implemented in a few lines of code (\reffig{pytorch}). Our method is based on the following observation: if we perturb a single component of a network's input, then we would like the \textit{change} in the output to be independent of the other input components. As discussed later, this information is present in the function's Hessian matrix. To encourage a deep neural network to be disentangled, we propose minimizing the off-diagonal entries of the function's Hessian matrix. We call this regularization term a \textit{Hessian Penalty}. Since we can always obtain an estimate of a function's Hessian via finite differences, our method is model-agnostic and requires no auxiliary networks such as encoders.  

We present experiments spanning several architectures and datasets that show applying our Hessian Penalty to generative image models causes the generator's output to become smoother and more disentangled in latent space. We also show that the Hessian Penalty has a tendency to ``turn-off'' latent components, introducing sometimes significant shrinkage in the latent space. We apply our regularization term to BigGAN~\cite{brock2019large} and ProGAN~\cite{karras2018progressive} on ImageNet \cite{deng2009imagenet}, Zappos50K~\cite{finegrained_shoes} and CLEVR~\cite{johnson2017clevr}. We provide quantitative metrics that demonstrate our method induces disentanglement, latent space shrinkage and smoothness compared to baseline models.

\section{Related Work}

\myparagraph{Derivative-based regularization.}
Recently, researchers have proposed regularizing derivatives of various orders to enhance the performance of deep networks. Most notably, Moosavi et al.~\cite{moosavi2019robustness} regularized the eigenvalues of  classifiers' Hessian matrices to improve adversarial robustness. Several works have also explored regularizing derivatives in generative models. StyleGAN-v2~\cite{Karras2019stylegan2} presented a regularization function to encourage the Jacobian matrix of the generator in generative adversarial networks (GANs) ~\cite{goodfellow2014generative} to be orthogonal. Odena et al.~\cite{odena2018generator} introduced a regularization term for clamping the generator's Jacobian's singular values to a target range. To combat mode collapse in image-to-image translation~\cite{isola2017image,zhu2017unpaired}, Yang et al.~\cite{yang2019diversity} proposed a regularization term that encourages changes in the output of the generator to be proportional to changes in latent space; this effectively amounts to preventing the generator's average gradient norm from being degenerate. Ramesh et al.~\cite{ramesh2018spectral} proposed a regularizer for aligning singular vectors of the generator's Jacobian to coordinate axes in latent space to encourage disentanglement. The gradient penalty~\cite{gulrajani2017improved} was proposed to regularize the input gradient of discriminators in GANs.

\myparagraph{Disentanglement in generative models.}
A plethora of prior work on disentangling deep networks focuses on variational autoencoders (VAEs)~\cite{kingma2014auto} with various extensions to the original VAE formulation~\cite{DBLP:conf/iclr/HigginsMPBGBML17,kim2018disentangling,locatello2019challenging,karaletsos2015bayesian,kulkarni2015deep,jha2018disentangling}. Several methods have been proposed to induce disentangled representations in GANs. InfoGAN \cite{chen2016infogan} proposed maximizing the mutual information between an auxiliary latent code and the generator's output. Recent methods have used latent code swapping and mixing for learning disentangled models, coupled with adversarial training~\cite{mathieu2016disentangling,singh2019finegan,hu2018disentangling}. StyleGAN~\cite{karras2019style} introduced a generator architecture that enables control of aspects such as object pose and color. %
Disentanglement of 3D factors of variation has been learned by introducing implicit 3D convolutional priors into the generator~\cite{nguyen2019hologan} or by using explicit differentiable rendering pipelines~\cite{zhu2018visual}. 
Recently, it has been shown that GANs automatically learn to disentangle certain object categories in the channels of their intermediate activations~\cite{bau2019gandissect}; this innate disentanglement can then be leveraged to perform semantic edits on inversions of natural images in the generator's latent space~\cite{bau2019semantic,shen2020interpreting,collins2020editing}.

A line of work has also explored using vector arithmetic in latent space to control factors of variation. For example, DCGAN~\cite{radford2015unsupervised} showed that latent vectors corresponding to specific semantic attributes could be added or subtracted in latent space to change synthesized images in a consistent way. Jahanian et al.~\cite{jahanian2019steerability} learned directions in GANs corresponding to user-provided image transformations. Recently, an unsupervised approach~\cite{voynov2020unsupervised} learns to discover interpretable directions by learning both latent space directions and a classifier to distinguish between those directions simultaneously.

\myparagraph{Independent component analysis.} 
The Hessian Penalty is somewhat reminiscent of Independent Component Analysis (ICA)~\cite{hyvarinen2000independent}, a class of algorithms that tries to ``unmix" real data into its underlying independent latent factors. Recent work has extended nonlinear ICA to modern generative models~\cite{khemakhem2020variational,khemakhem2020ice}, including VAEs and energy-based models~\cite{lecun2006tutorial}. These papers have shown that, under certain conditions, the independent latent factors can be identified up to simple transformations. A simple way to connect our work to ICA is by considering the Hessian Penalty as imposing a prior on the space of possible mixing functions. This prior biases the mixing function to have a diagonal Hessian.%

\section{Method}

We begin by presenting a model-agnostic regularization term that aims to encourage the emergence of disentangled representations. We then discuss how to apply this to generative models.

\subsection{Formulation}
\lblsec{formulation}
Consider any scalar-valued function $G: \mathbb{R}^{|z|} \rightarrow \mathbb{R}$, where $z$ denotes the input vector to $G$ and $|z|$ denotes the dimensionality of $z$. To disentangle $G$ with respect to $z$, we need each component of $z$ to control just a single aspect of variation in $G$; in other words, varying $z_i$ should produce a change in the output of $G$, mostly independently of the other components $z_{j \neq i}$.

Let's consider what this means mathematically. We refer to the Hessian matrix of $G$ with respect to $z$ as $H$. Let's consider an arbitrary off-diagonal term $H_{ij}$ of this Hessian and contemplate what it means if it is equal to zero:

\begin{equation} \lbleq{highschool_calculus}
H_{ij} = \frac{\partial^2 G}{\partial z_i \partial z_j} = \frac{\partial}{\partial z_j} \left( \frac{\partial G}{\partial z_i} \right) = 0.
\end{equation}
Consider the inner derivative with respect to $z_i$ in \refeq{highschool_calculus}. Intuitively, that derivative measures how much $G$'s output changes as $z_i$ is perturbed. If the outer derivative with respect to $z_j$ of the inner derivative is zero, it means that $\frac{\partial G}{\partial z_i}$ is not a function of $z_j$. In other words, $z_j$ has no effect on how perturbing $z_i$ will change $G$'s output.

The above observation gives rise to our main idea. We propose adding a simple regularizer to any function/deep neural network $G$ to encourage its Hessian with respect to an input to be diagonal; we simply minimize the sum of squared off-diagonal terms. We call this regularization function a \textit{Hessian Penalty}:

\begin{equation} \lbleq{reg}
    \mathcal{L}_H(G) = \sum_{i=1}^{|z|} \sum_{j \neq i}^{|z|} H_{ij} ^ 2.
\end{equation}

\subsection{Generalization to Vector-valued Functions}
\lblsec{manytomany}
Of course, most deep networks---such as generative models that synthesize realistic images, video, or text---are \textit{not} scalar-valued functions. A simple way to extend the above formulation to these vector-valued functions is to instead penalize the Hessian matrix of each scalar component in the output of $x = G(z)$ individually, where $x$ denotes the vector of outputs. For brevity, we refer to the length-$|x|$ collection of each $|z|\times|z|$ Hessian matrix as $\mathbf{H}$, where $\mathbf{H}_i$ is the Hessian matrix of $x_i$. Then \refeq{reg} can be slightly modified to:

\begin{equation} \lbleq{reg_many}
    \mathcal{L}_{\mathbf{H}}(G) = \max_i \mathcal{L}_{\mathbf{H}_i}(G),
\end{equation}
where  $\mathcal{L}_{\mathbf{H}_i}$ refers to computing \refeq{reg} with $H = \mathbf{H}_i$. This is a general way to extend the Hessian Penalty to vector-valued functions without leveraging any domain knowledge. In place of the $\max$, we also experimented with taking a mean. We have found that the formulation above imposes a stronger regularization in certain instances, but we have not thoroughly explored alternatives.%

\subsection{The Hessian Penalty in Practice}
\lblsec{pratice}

\begin{figure*}[t]
\begin{center}
\begin{minted}{python}
def hessian_penalty(G, z, k, epsilon):
    # Input G: Function to compute the Hessian Penalty of
    # Input z: Input to G that the Hessian Penalty is taken w.r.t. 
    # Input k: Number of Hessian directions to sample
    # Input epsilon: Finite differences hyperparameter
    # Output: Hessian Penalty loss
    G_z = G(z)
    vs = epsilon * random_rademacher(shape=[k, *z.size()])
    finite_diffs = [G(z + v) - 2 * G_z + G(z - v) for v in vs]
    finite_diffs = stack(finite_diffs) / (epsilon ** 2)
    penalty = var(finite_diffs, dim=0).max()
    return penalty
\end{minted}
\caption{PyTorch-style pseudo-code for the Hessian Penalty. It encourages the change in a neural net's output caused by perturbing one input component to be invariant to the other components, paving the way for disentanglement.}
\lblfig{pytorch}
\end{center}
\end{figure*}

Computing the Hessian matrices in \refeq{reg} and \refeq{reg_many} during training is slow when $|z|$ is large. Luckily, it turns out that we can express \refeq{reg} in a different form which admits an unbiased stochastic approximator:

\begin{equation} \lbleq{approx}
    \mathcal{L}_H(G) = \text{Var}_v \left( v^T H v \right) 
\end{equation}
Where $v$ are Rademacher vectors (each entry has equal probability of being $-1$ or $+1$), and $v^T H v$ is the second directional derivative of $G$ in the direction $v$ times $|v|$. \refeq{approx} can be estimated using the unbiased empirical variance. In practice, we sample a small number of $v$ vectors, typically just two, to compute this empirical variance. If  \refeq{reg} and \refeq{approx} are equal to each other, then minimizing \refeq{approx} is equivalent to minimizing the sum of squared off-diagonal elements in $H$. This result was previously shown by Hutchinson~\cite{hutchinson1989stochastic,avron2011randomized}, but we include a simple proof in Appendix B.

\newtheorem{theorem2}{Theorem}
\begin{theorem} \lbleq{approx_thm}
$\text{Var}_v \left( v^T H v \right) = 2 \sum_{i=1}^{|z|} \sum_{j \neq i}^{|z|} H_{ij} ^ 2$.
\end{theorem}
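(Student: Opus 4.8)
The plan is a short second-moment calculation that uses only two properties of a Rademacher vector $v\in\{-1,+1\}^{|z|}$ with independent entries: $v_i^2=1$ surely, and hence $\E[v_i v_j]=\delta_{ij}$. The first move is to peel off the diagonal of the quadratic form. Since $v_i^2=1$,
\[
v^\top H v \;=\; \sum_{i} H_{ii}\,v_i^2 \;+\; \sum_{i\neq j} H_{ij}\,v_i v_j \;=\; \trace H \;+\; \sum_{i\neq j} H_{ij}\,v_i v_j .
\]
The term $\trace H$ is deterministic, and the remaining sum has mean zero because $\E[v_i v_j]=0$ whenever $i\neq j$. Therefore $\text{Var}_v(v^\top H v)=\E\!\left[\left(\sum_{i\neq j} H_{ij}\,v_i v_j\right)^{2}\right]$, and the whole problem reduces to evaluating this one second moment.

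Next I would expand the square as $\sum_{i\neq j}\sum_{k\neq l} H_{ij}H_{kl}\,\E[v_i v_j v_k v_l]$ and push the expectation inside. The key step is the observation that, by independence and $v_m^2=1$, the product expectation $\E[v_i v_j v_k v_l]$ equals $1$ when every index value occurs an even number of times in the multiset $\{i,j,k,l\}$ and equals $0$ otherwise; under the standing constraints $i\neq j$ and $k\neq l$ this forces $\{k,l\}=\{i,j\}$ as sets, i.e.\ either $(k,l)=(i,j)$ or $(k,l)=(j,i)$. The two surviving cases contribute $H_{ij}^2$ and $H_{ij}H_{ji}$ respectively, so the second moment is $\sum_{i\neq j}\bigl(H_{ij}^2 + H_{ij}H_{ji}\bigr)$. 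Finally I would invoke symmetry of the Hessian, $H_{ij}=H_{ji}$ (Schwarz's theorem, valid for $G$ twice continuously differentiable), to collapse this to $2\sum_{i}\sum_{j\neq i} H_{ij}^2$, which is the claimed identity.

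The only real obstacle is the fourth-moment bookkeeping: one must correctly characterize which coincidences among $i,j,k,l$ make $\E[v_i v_j v_k v_l]$ nonzero, and not over- or under-count them. Isolating the diagonal at the very start, as above, is what keeps this clean, since it removes the indices with $i=j$ from the sum entirely. A more pedestrian alternative would be to expand $\E[(v^\top H v)^2]$ directly using $\E[v_i v_j v_k v_l]=\delta_{ij}\delta_{kl}+\delta_{ik}\delta_{jl}+\delta_{il}\delta_{jk}-2\,\delta_{ijkl}$, compute the four resulting sums ($(\trace H)^2$, $\sum_{i,j}H_{ij}^2$, $\sum_{i,j}H_{ij}H_{ji}$, and $-2\sum_i H_{ii}^2$), and then subtract $(\E[v^\top H v])^2=(\trace H)^2$; this works too but is more error-prone because of the $-2\,\delta_{ijkl}$ correction that prevents the all-equal case from being triple-counted. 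Either route isolates the symmetry of $H$ as the one analytic ingredient, everything else being combinatorics on $\pm1$ variables.
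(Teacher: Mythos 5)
Your proposal is correct and follows essentially the same route as the paper's proof: both isolate the diagonal of the quadratic form as a deterministic $\trace H$ using $v_i^2=1$, reduce the problem to the second moment of $\sum_{i\neq j}H_{ij}v_iv_j$, and observe that by independence of the Rademacher entries only the terms with $\{k,l\}=\{i,j\}$ survive, with symmetry of $H$ supplying the final factor of $2$. The only cosmetic difference is that the paper invokes $H_{ij}=H_{ji}$ early to collapse to a sum over $i<j$ and then expands a covariance, whereas you expand the fourth moment over all $i\neq j$ and use symmetry at the end; the two bookkeeping schemes are equivalent.
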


\begin{proof}
See \refapp{app:proof}.
\end{proof}

One problem still remains: we need to be able to quickly compute the second directional derivative term in \refeq{approx}. We can do this via a second-order central finite difference approximation:

\begin{equation} \lbleq{fd}
    v^T H v \approx \frac{1}{\epsilon^2} \left[ G(z + \epsilon v) - 2 G(z) + G(z - \epsilon v) \right],
\end{equation}
where $\epsilon > 0$ is a hyperparameter that controls the granularity of the second directional derivative estimate. In practice, we use $\epsilon = 0.1$. This approximation enables the Hessian Penalty to work for functions whose analytic Hessians are zero, such as piece-wise linear neural networks.

Figure \ref{fig:pytorch} shows an implementation of \refeq{approx} in PyTorch using the finite difference approximation described in \refeq{fd}; it is only about seven lines of code and can be easily inserted into most code bases.

\begin{figure*}[t]
\begin{center}
\includegraphics[width=1.0\linewidth]{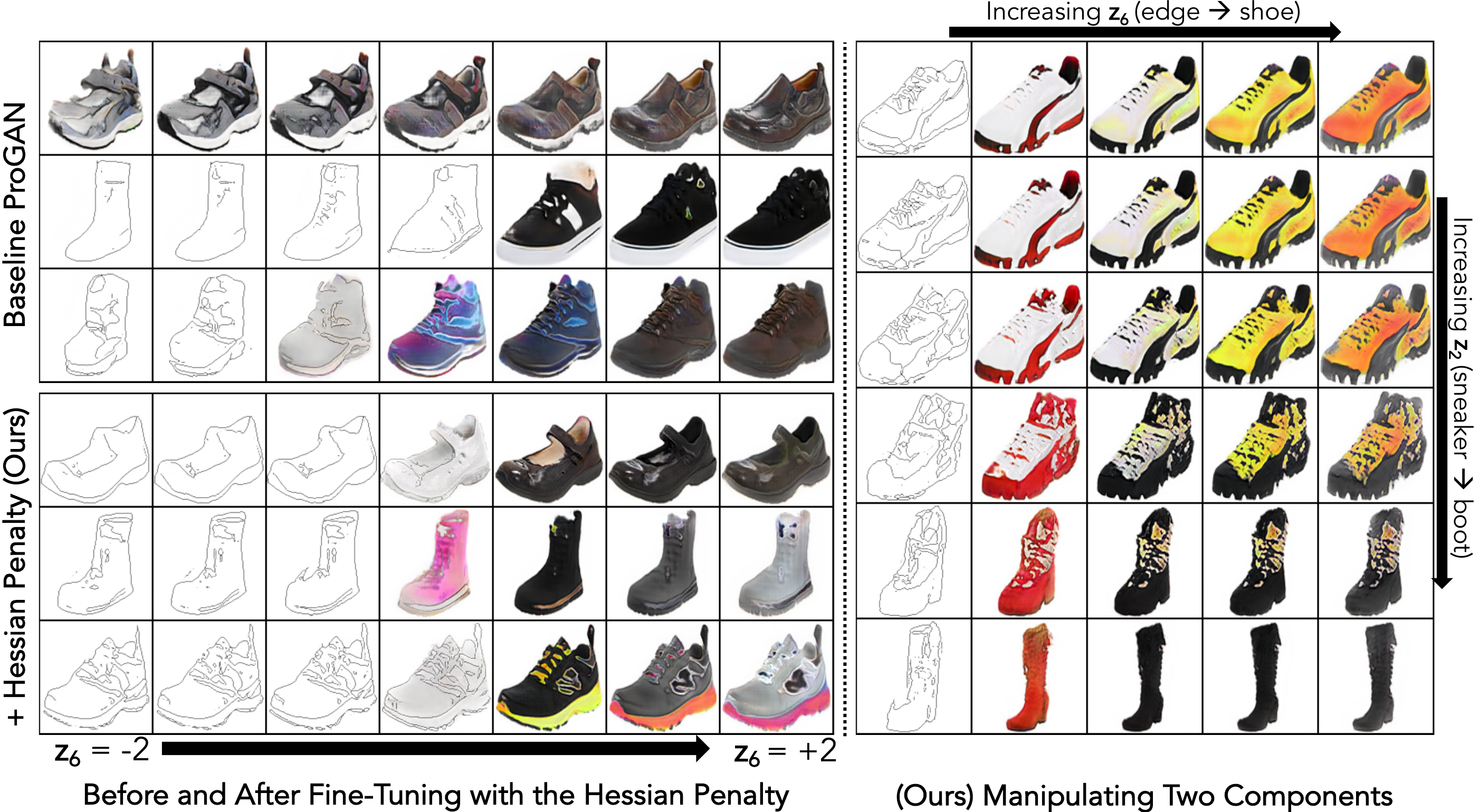}

\caption{The effect of the Hessian Penalty on disentangling the edge $\rightarrow$ shoe factor of variation in unconditional ProGAN trained on Edges$+$Shoes. We sample three 12-dimensional $z$ vectors from a standard Gaussian. Each row corresponds to one of these three vectors. Moving across a row, we interpolate the scalar component $z_6$ from $-2$ to $+2$, leaving the other 11 components fixed. \textbf{Top left:} A ProGAN prior to fine-tuning. It fails to uncover a disentangled $z$ component that controls edge $\rightarrow$ shoe; sometimes the shoe never becomes an edge (first row), and the value of $z_6$ where an edge becomes a shoe is inconsistent. Even when the edge does transform, the resulting shoe barely resembles the edge. \textbf{Bottom left:} Fine-tuning the same ProGAN with our Hessian Penalty. After fine-tuning, edge $\rightarrow$ shoe is cleanly disentangled by $z_6$; edges consistently become shoes right at $z_6 = 0$. For $z_6 > 0$, the component changes the style of the shoe while preserving the structure. \textbf{Right: (ours)} Manipulating two components simultaneously; we sample $z_6$ at $-1$, 0.3, 0.6, 0.9 and 1.2, and $z_2$ regularly between $-2$ and $2$.}

\label{fig:e2s_bl_z6}
\end{center}
\vspace{-0.2in}

\end{figure*}
\myparagraph{Generalization to arbitrary feature spaces. }
In the above description of the Hessian Penalty, $z$ has referred to the input to the function $G$. In general, though, $z$ could be any intermediate feature space of $G$. Similarly, $G$ could refer to any downstream intermediate activation in a generator. In most experiments, we tend to optimize the Hessian Penalty of several intermediate activations with respect to the initial input $z$ vector to achieve a stronger regularization effect.
 
\subsection{Applications in Deep Generative Models}

The above formulation of the Hessian Penalty is model-agnostic; it can be applied to any function without modification. But, here we focus on applying it to generative models. Specifically, we will investigate its applications with generative adversarial networks (GANs)~\cite{goodfellow2014generative}. For the remainder of this paper, $G$ will now refer to the generator and $D$ will refer to the discriminator. GANs are commonly trained with the following adversarial objective, where $x$ now refers to a sample from the real distribution being learned, and $f$ specifies the GAN loss used:

\begin{equation}
    \mathcal{L}_{\text{adv}} = \mathop{\mathbb{E}}_{x \sim p_{\text{data}}(x)}\left[f(D(x))\right] + \mathop{\mathbb{E}}_{z \sim p_z(z)}\left[f(1 - D(G(z)))\right].
\end{equation}

When we apply a Hessian Penalty, the discriminator's objective remains unchanged. The generator's loss becomes:

\begin{equation} \lbleq{gan_hp}
    \mathcal{L}_\text{G} = \underbrace{\displaystyle \mathop{\mathbb{E}}_{z \sim p_z(z)} \left[f\left(1 - D(G(z))\right)\right]}_{\text{Standard Adversarial Loss}} + \underbrace{\lambda \displaystyle \mathop{\mathbb{E}}_{z \sim p_z(z)} \left[ \mathcal{L}_{\mathbf{H}}(G) \right]}_{\text{The Hessian Penalty}},
\end{equation}
where the weight $\lambda$  balances the two terms. Interestingly, we find that fine-tuning a pre-trained GAN with \refeq{gan_hp} in many cases tends to work as well as or better than training from scratch with the Hessian Penalty. This feature makes our method more practical since it can  be used to quickly adapt pre-trained GANs.

\section{Experiments}
\lblsec{expr}

We first demonstrate the effectiveness of the Hessian Penalty in encouraging disentangled representations to emerge in ProGAN across several datasets. Then, we show that the Hessian Penalty can be extended to learn interpretable directions in BigGAN's latent space in an unsupervised manner. For full animated visual results, including comparisons against InfoGAN, please see our \href{http://www.wpeebles.com/hessian-penalty}{website}.

\subsection{ProGAN with Hessian Penalty}
\lblsec{ft_proggan}
We first qualitatively assess how well our Hessian Penalty performs when disentangling an unconditional ProGAN~\cite{karras2018progressive} trained on various datasets. In these experiments, we apply the Hessian Penalty to the first ten out of thirteen convolutions, immediately following pixel normalization layers. Synthesized images are $128\times128$ resolution in all experiments.

\subsubsection{Edges$+$Shoes.} A commonly-used dataset for the problem of image-to-image translation~\cite{isola2017image,zhu2017unpaired} is Edges$\rightarrow$Shoes~\cite{finegrained_shoes}. To see if our method can automatically uncover a $z$ component that performs image-to-image translation without domain supervision, we train an unconditional ProGAN on Edges$+$Shoes, created by mixing all ~50,000 edges and 50,000 shoes into a single image dataset. We then train ProGAN on this mixture of images. 

As seen in Figure~\ref{fig:e2s_bl_z6}, the baseline ProGAN is unable to uncover a component that controls edges$\leftrightarrow$shoes. However, once we fine-tune the ProGAN with the Hessian Penalty, we uncover such a component---$z_6$. When this component is set greater than zero, it produces shoes; when set less than zero, it produces edges. Interestingly, this component is akin to recent multimodal image-to-image translation methods~\cite{zhu2017toward,huang2018multimodal}. As one increases $z_6$ beyond zero, it changes the style and colors of the shoe while preserving the underlying structure. Figure \ref{fig:e2s_bl_z6} also shows how we can leverage this disentanglement to easily manipulate the height of a shoe without inadvertently switching from the edge domain to the shoe domain, or vice versa. The baseline model fails to perform such clean edits. Empirically, we observed that a ProGAN trained with InfoGAN's loss term~\cite{chen2016infogan} failed to uncover disentangled factors of variation of this dataset (see our \href{http://www.wpeebles.com/hessian-penalty}{website} for animated visual comparisons).

\begin{figure*}[t]
\begin{center}
\includegraphics[width=0.8\linewidth]{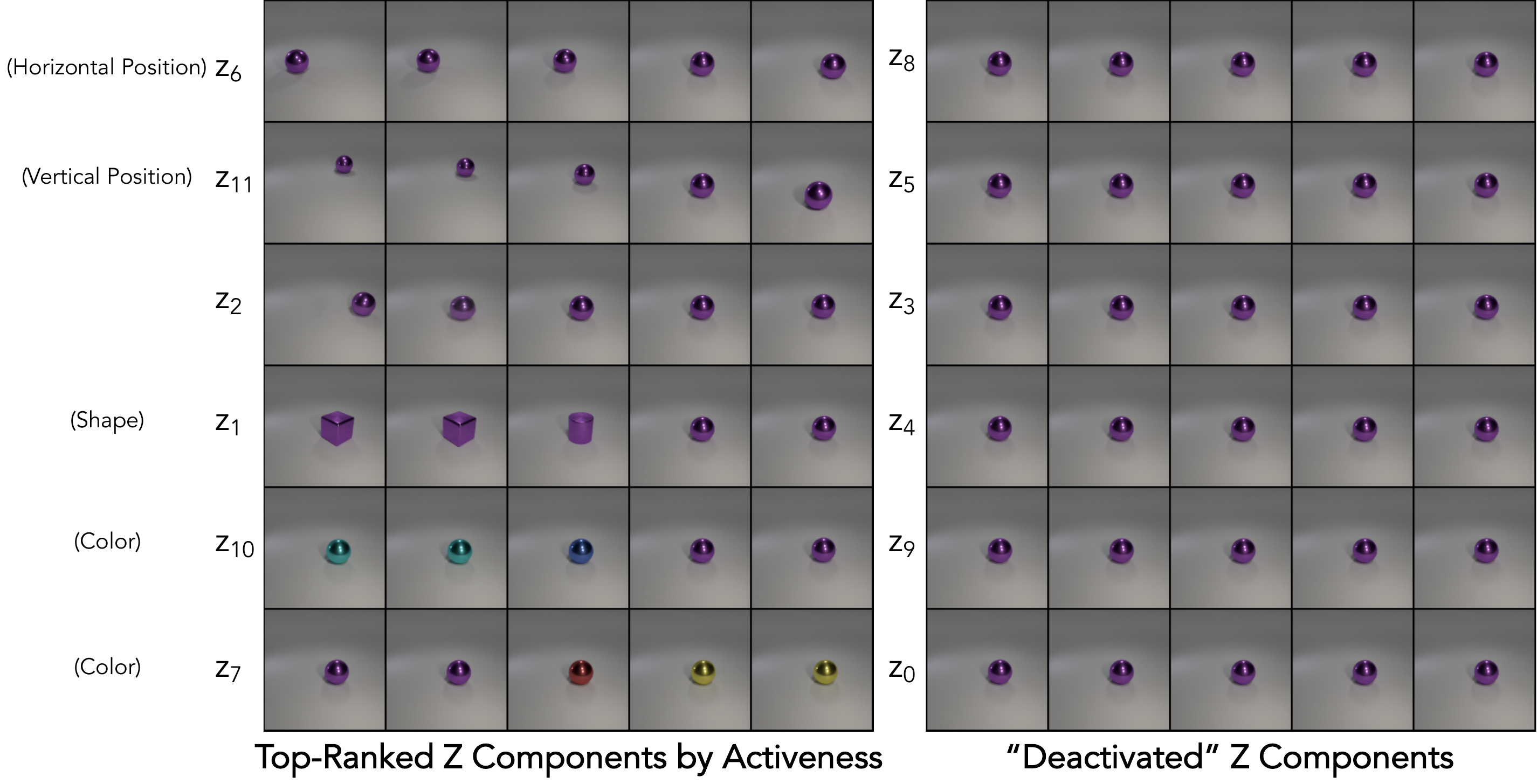}
\caption{All $12$ $z$ components learned by our method on CLEVR-Simple, sorted by their activeness scores (see \ref{sec:shrinkage}). \textbf{Left:} The top six scoring components which uncover color, position, and shape factors of variation. \textbf{Right:} The bottom six scoring components; note that they barely affect the image.}
\label{fig:uncurated}
\end{center}
\end{figure*}

\subsubsection{CLEVR.}

It is difficult to determine if a disentanglement algorithm ``works'' by only testing on real data since the ground truth factors of variation in such datasets are usually unknown and sometimes subjective. As a result, we create three synthetic datasets based on CLEVR~\cite{johnson2017clevr}. The first dataset, CLEVR-Simple, has four factors of variation: object color, shape, and location (both horizontal and vertical). The second, CLEVR-1FOV, features a red cube with just a single factor of variation (FOV): object location along a single axis. The third, CLEVR-Complex, retains all factors of variation from CLEVR-Simple but adds a second object and multiple sizes for a total of ten factors of variation (five per object). Each dataset consists of approximately 10,000 images.

Figure~\ref{fig:uncurated} shows all $12$ $z$ components learned by our method when trained on CLEVR-Simple; we are able to uncover all major factors of variation in the dataset. Figure \ref{fig:two_obj_comp} compares the performance of our method on CLEVR-Complex. Our method does a better job of separating object control into distinct $z$ components. For example, changing $z_{11}$ in the baseline model leads to significant changes in both objects. After fine-tuning with the Hessian Penalty, it mostly---but not entirely---controls the vertical position of the left-most object.

\begin{figure*}[t]
\begin{center}
\includegraphics[width=1.0\linewidth]{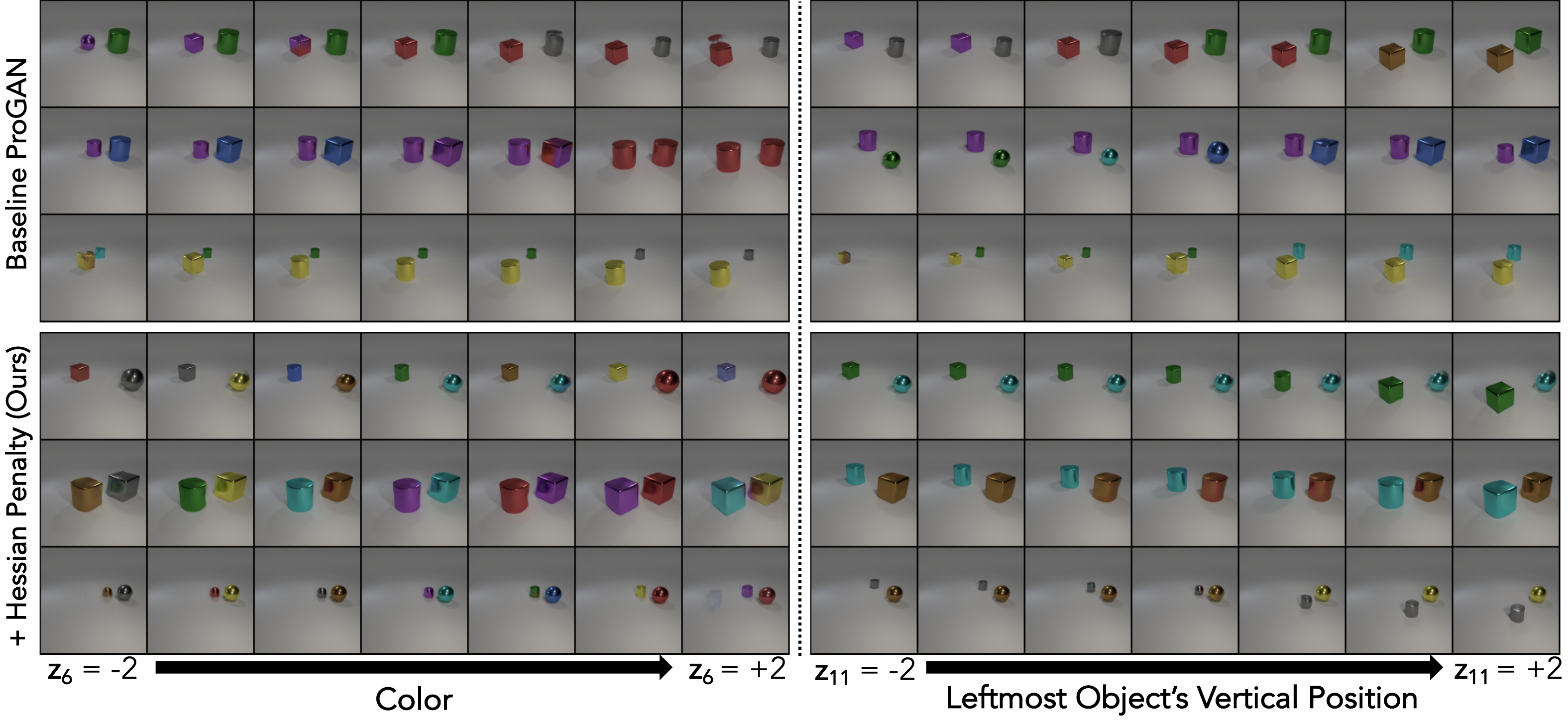}
\caption{We compare the disentanglement quality of a ProGAN with and without our regularization term on CLEVR-Complex. \textbf{Top-Left:} The baseline ProGAN's $z_7$ component does not appear to control a disentangled, interpretable factor of variation. \textbf{Bottom-Left:} After fine-tuning, $z_7$ controls the color of both objects in the scene. Although the component is interpretable, it is not truly disentangled since the color of the objects are two independent factors of variation. \textbf{Top-Right:} In the baseline ProGAN, the latent component $z_{11}$ somewhat controls the vertical position of the left-most object in the scene. However, it significantly alters the appearance of the right-most object. \textbf{Bottom-Right:} After fine-tuning with our Hessian Penalty, $z_{11}$ more cleanly controls vertical movement of the left-most object, although the color of the object still slightly changes sometimes. Note that the right-most object is left mostly unchanged.}
\label{fig:two_obj_comp}
\end{center}
\end{figure*}
\begin{figure*}[t]
\begin{center}
\includegraphics[width=\linewidth]{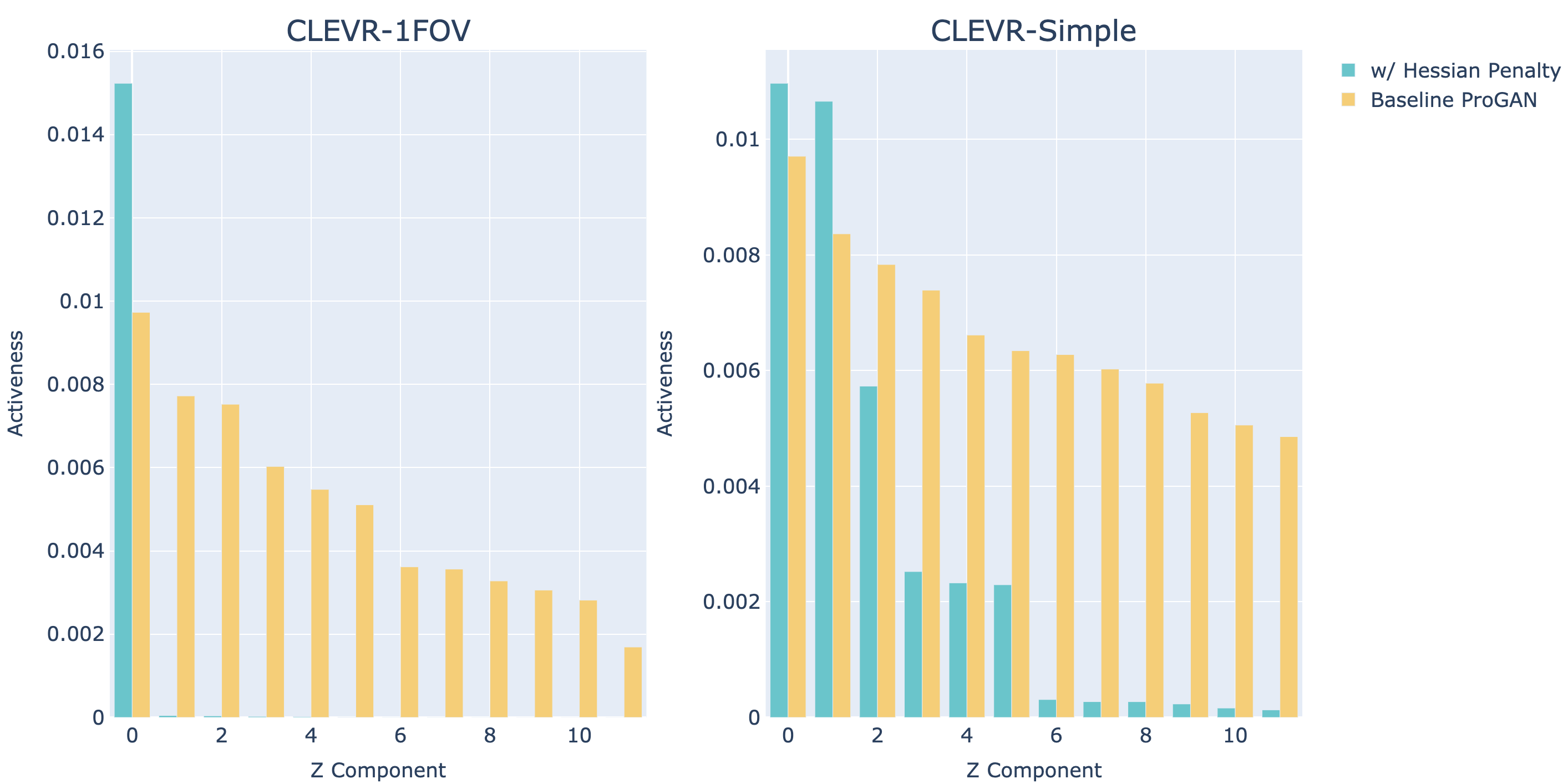}
\vspace{-0.1in}
\caption{\textbf{Latent space shrinkage.} We sort the 12 $z$ components of different generators by their ``activeness'' (how much they control $G$'s output). In baseline ProGAN, $z$ components have somewhat uniform activeness, regardless of the number of factors of variation in the data. When training with the Hessian Penalty on CLEVR-1FOV (one true factor of variation), all $z$ components except one are effectively turned-off. We observe a similar effect in CLEVR-Simple (four factors of variation), where six $z$ components have virtually no control of $G$'s output after being trained with the Hessian Penalty.}
\label{fig:dead}
\end{center}
\vspace{-0.3in}
\end{figure*}

\subsection{Overparameterized Latent Spaces}
\lblsec{shrinkage}

In most circumstances, we do not know a-priori how many factors of variation are in a dataset. Therefore, an ideal disentanglement algorithm would be able to learn a \textit{sparse} representation where it only has $c$ active $z$ components if there are $c$ factors of variation in the dataset. Qualitatively, we observe in several instances that our method ``turns-off'' extra components when its latent space is overparameterized. Figure \ref{fig:uncurated} illustrates this. For CLEVR-Simple, only half of the components produce significant changes in $G(z)$; $G$ effectively collapses on the remaining six components. Is there a way to quantitatively assess the extent to which components get deactivated?

We propose defining the \textit{activeness} of a component $z_i$ as the mean variance of $G(z)$ as we change $z_i$, but leave the other components fixed (where the variance is over $z_i$ and the mean is over all pixels in $G(z)$). We compare the activeness of all 12 components in models trained with and without the Hessian Penalty in Figure \ref{fig:dead}. Indeed, in several instances, training with the Hessian Penalty substantially reduces the activeness of a subset of $z$ components. In particular, our model trained on CLEVR-1FOV effectively deactivates all but one of its $z$ components.

An informal yet intuitive reason to expect that the Hessian Penalty encourages such behavior is by observing that a degenerate solution to the regularization term is for $G$ to completely mode collapse on $z$; if $G$ is no longer a function of $z$, then by definition, its Hessian (including its off-diagonal terms) has all zeros. Similarly, if $G$ collapses on just a subset of $z$ components, then any off-diagonal terms in the Hessian involving those collapsed components will be zero as well.

We also explore what happens when we \textit{underparameterize} the latent space. We train a ProGAN component with $|z| = 3$ on CLEVR-Simple, which has four factors of variation (we refer to this experiment as CLEVR-U). Although it is impossible to fully disentangle CLEVR in this case, we observe that each $z$ component controls $G(z)$ substantially more smoothly after fine-tuning with the Hessian Penalty; we quantitatively assess this in the next section.

 \begin{table}[t]
  {\small
    \centering
    \resizebox{1.0\linewidth}{!}{
 \aboverulesep=0ex
 \belowrulesep=0ex
 \renewcommand{\arraystretch}{1.2}
\begin{tabular}{lcccccccccc}
      \toprule
      \multirow{2}[3]{*}{Method} &
      \multicolumn{2}{c}{Edges$+$Shoes} &
      \multicolumn{2}{c}{CLEVR-Simple} &
      \multicolumn{2}{c}{CLEVR-Complex} &
      \multicolumn{2}{c}{CLEVR-U} &
      \multicolumn{2}{c}{CLEVR-1FOV} \\
      \cmidrule(lr){2-3}
      \cmidrule(lr){4-5}
      \cmidrule(lr){6-7}
      \cmidrule(lr){8-9}
      \cmidrule(lr){10-11}
      & PPL & FID & PPL & FID & PPL & FID & PPL & FID & PPL & FID \\
      \midrule
      
InfoGAN  & 2952.2 & \textbf{10.4} & 56.2 & \textbf{2.9} & 83.9 & \textbf{4.2} & 766.7 & 3.6 & 22.1 & 6.2\\

ProGAN+  & 3154.1 & 10.8 & 64.5 & 3.8 & 84.4 & 5.5 & 697.7 & 3.4 & 30.3 & 9.0\\

ProGAN & 1809.7 & 14.0 & 61.5 & 3.5 & 92.8 & 5.8 & 720.2 & \textbf{3.2} & 35.5 & 11.5\\

w/ HP  & 1301.3 & 21.2 & 45.7 & 25.0 & \textbf{73.1} & 21.1 & 68.7 & 26.6 & 20.8 & \textbf{2.3}\\

w/ HP FT & \textbf{554.1} & 17.3 & \textbf{39.7} & 6.1 & 74.7 & 7.1 & \textbf{61.6} & 26.8 & \textbf{10.0} & 4.5\\
\bottomrule
\vspace{0.1pt}
 \end{tabular}
 }
 }
\caption{Comparing Perceptual Path Lengths (PPL) and Fréchet Inception Distances (FID) for different ProGAN-based methods. Lower is better for both metrics. We compare four different methods: a baseline ProGAN, fine-tuning a ProGAN with the Hessian Penalty (HP FT), training a ProGAN from scratch with the Hessian Penalty (HP) and training a ProGAN with the InfoGAN objective. We also compare against baseline ProGANs that were trained an equal number of iterations as the fine-tuned models (ProGAN$+$). %
We report the model with the best FID within the same number of training iterations, except for ProGAN. 
 PPL and FID was computed with 100,000 and 50,000 samples.} %
\label{tab:ppls}
\vspace{-0.2in}

\end{table}

\subsection{Quantitative Evaluation of Disentanglement}
\lblsec{quant}

Evaluating disentanglement remains a significant challenge~\cite{locatello2019challenging}. Moreover, most existing metrics are designed for methods that have access to an encoder to approximately invert $G$~\cite{karras2019style}. As a result, we report Perceptual Path Length (PPL)~\cite{karras2019style}, a disentanglement metric proposed for methods without encoders:

\begin{equation}
    \text{PPL}(G) = \mathop{\mathbb{E}}_{z^{(1)}, z^{(2)} \sim p_z(z)}\left[\frac{1}{\alpha^2} d\left(G(z^{(1)}), G(\text{slerp}(z^{(1)}, z^{(2)}; \alpha))\right)\right],
\end{equation}
where $z^{(1)}$ and $z^{(2)}$ are two randomly sampled latent vectors, $d(\cdot, \cdot)$ denotes a distance metric, such as LPIPS~\cite{zhang2018perceptual}, and slerp refers to spherical linear interpolation~\cite{white2016sampling}. Intuitively, PPL measures how much $G(z)$ changes under perturbations to $z$; it is a measure of smoothness. %
Given that we are regularizing the Hessian, which controls smoothness, a reasonable question is: ``are we just optimizing PPL?'' The answer is no; since our method only explicitly penalizes off-diagonal components of the Hessian, we are not optimizing the smoothness of $G$ (which is usually defined as being proportional to the maximum eigenvalue of $G$'s Hessian). As a simple counter-example, the one-to-one function $G(z) = \beta z^3$ would trivially satisfy $\mathcal{L}_{\mathbf{H}} = 0$ but could be arbitrarily ``unsmooth''---and thus have large (bad) PPL---for large $\beta$. Empirically, we find that smaller Hessian Penalties do not imply lower PPLs in general. 

Table \ref{tab:ppls} reports PPLs as well as Fréchet Inception Distances (FIDs)~\cite{heusel2017gans}, a coarse measure of image quality. We also compare against a ProGAN trained to maximize the mutual information between a subset of the inputs $z$ vector and the output image, as in InfoGAN~\cite{chen2016infogan}. In general,  our method attains substantially better PPLs compared to other methods across datasets. However, we do sometimes observe a degradation of image quality, especially early in training/fine-tuning. This trait is somewhat shared with $\beta$-VAE-based methods, which essentially trade disentanglement for reconstruction accuracy~\cite{DBLP:conf/iclr/HigginsMPBGBML17}.

\begin{figure*}[t]
\begin{center}
\includegraphics[width=1.0\linewidth]{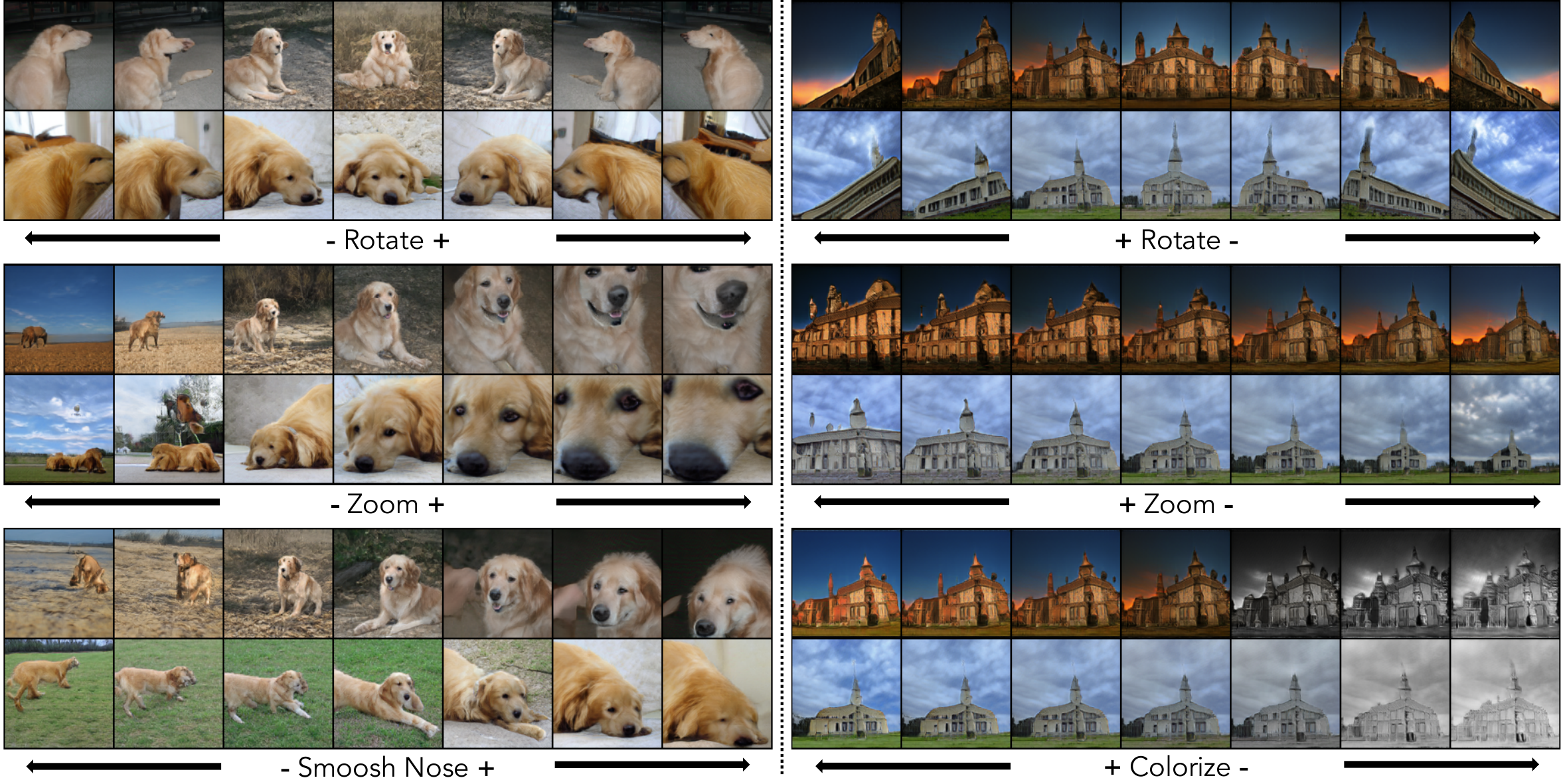}
\vspace{-0.1in}
\caption{Examples of orthonormal directions learned by our method in BigGAN conditioned to synthesize ImageNet Golden Retrievers or Churches. Each factor of variation is a single linear direction in $z$-space. For each direction $Aw_i$ shown, we show $z + \eta Aw_i$, where we linearly move from $\eta=-5$ to $5$ for the golden retriever zooming, church zooming and church colorization directions; we move from $\eta = -3$ to $3$ for the remaining directions. We note that the directions are not completely disentangled; for example, undergoing extreme zooms in Golden Retrievers can sometimes cause substantial changes to the background.}
\label{fig:dogs}
\end{center}
\vspace{-0.3in}

\end{figure*}
\begin{figure*}[t]
\begin{center}
\subfloat{\includegraphics[width=0.1\linewidth]{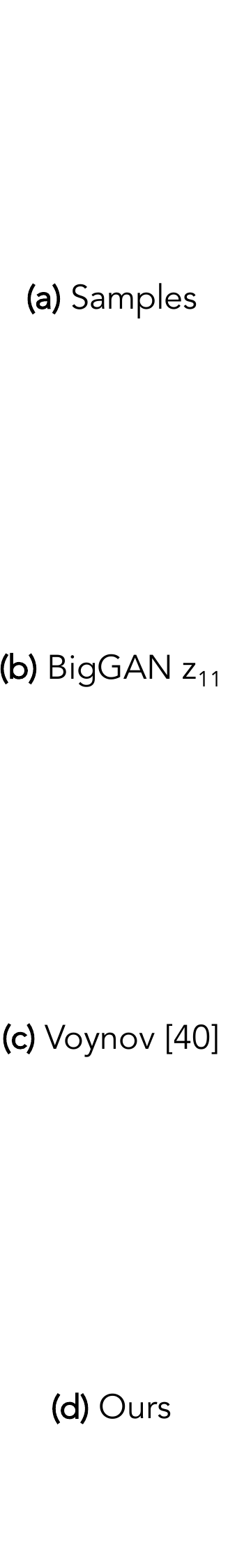}}
\subfloat{\begin{tabular}[b]{c}
\includegraphics[width=0.9\linewidth]{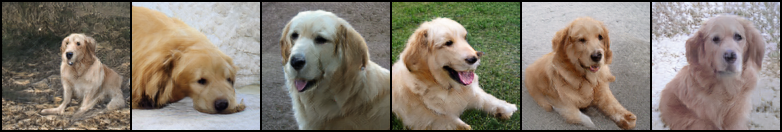}\\
\animategraphics[autoplay,loop,width=0.9\linewidth]{18}{figures/z11/}{001}{036}\\
\animategraphics[autoplay,loop,width=0.9\linewidth]{18}{figures/voynov/}{001}{036}\\
\animategraphics[autoplay,loop,width=0.9\linewidth]{18}{figures/ours/}{001}{036}\end{tabular}}
\vspace{-0.1in}
\caption{(View in Adobe Acrobat) Comparing the quality of latent space edits performed by different methods. (a) We sample six images of golden retrievers from BigGAN. (b) BigGAN's $z_{11}$ component somewhat controls---but does not fully disentangle---rotation; the dogs are unaligned even when their $z$ vectors share the same value of $z_{11}$. (c) We orthogonalize the dogs' $z$ vectors against Voynov and Babenko's \cite{voynov2020unsupervised} learned rotation direction, then add the direction from $\eta = -16$ to $16$. The dogs are unaligned. (d) We can simultaneously align all the dogs by orthogonalizing their $z$ vectors against our learned rotation direction. The alignment is preserved as we add and subtract our direction.}
\label{fig:doggos}
\end{center}
\vspace{-0.4in}
\end{figure*}

\subsection{Unsupervised Learning of Directions in BigGAN}

So far, we have only explored training $G$ with the Hessian Penalty. We can also use the Hessian Penalty to identify interesting directions in $z$-space while leaving $G$'s weights fixed. Recent works have explored learning directions of meaningful variation in $z$-space of pre-trained generators~\cite{jahanian2019steerability,voynov2020unsupervised}. Most notably, Voynov and Babenko~\cite{voynov2020unsupervised} proposed an unsupervised method for learning directions. 

We propose a simple way to learn interesting directions in an unsupervised fashion. Our method begins similarly to prior work~\cite{voynov2020unsupervised}. We instantiate a random orthogonal matrix $A \in \mathbb{R}^{|z|\times N}$, where $N$ refers to the number of directions we wish to learn; the columns of $A$ store the directions we are learning. We then generate images by computing $G(z + \eta Aw_i)$, where $w_i \in \{0,1\}^N$ is a one-hot vector which indexes the columns of $A$ and $\eta$ is a scalar  which controls how far $z$ should move in the direction $Aw_i$. $\eta$ is sampled from a uniform distribution $[-5, 5]$. Our methods diverge at this point. While Voynov and Babenko~\cite{voynov2020unsupervised} simultaneously learn a randomly-initialized regression and classification network to reconstruct $Aw_i$ and $\eta$ from $G(z)$ and $G(z + \eta Aw_i)$, we directly optimize $A^*=\arg \min_A \mathbb{E}_{z, w_i, \eta} \; \mathcal{L}_{\mathbf{H}}(G(z + \eta Aw_i))$, where the Hessian Penalty is now taken w.r.t. $w_i$ instead of $z$.
Intuitively, this amounts to trying to disentangle the columns of $A$. Importantly, we \textit{only} backprop the gradients into $A$, while $G$ is frozen throughout learning. There are no additional loss terms beyond the Hessian Penalty. We set $N=|z|$ in our experiments and restrict $A$ to be orthogonal by applying Gram-Schmidt during each forward pass.

We apply our discovery method to class-conditional BigGAN~\cite{brock2019large} trained on ImageNet~\cite{deng2009imagenet}. We perform two experiments; learning directions when BigGAN is restricted to producing (1) golden retrievers and (2) churches. Figure \ref{fig:dogs} shows our results. We are able to uncover several interesting directions, such as rotation, zooming and colorization. In Figure \ref{fig:doggos}, we compare our learned rotation direction to the one learned by Voynov and Babenko \cite{voynov2020unsupervised}, as well as a similar axis-aligned direction $z_{11}$ in BigGAN. Our direction does a better job at preserving alignment than both baselines. Empirically, we found that some of BigGAN's $z$ components, which are injected directly into deeper layers in $G$, already achieve a reasonable amount of axis-aligned disentanglement; they control aspects such as lighting, color filtering and background removal. We did not observe that our learned directions control these factors as well as certain individual $z$ components. 

One surprising trend we observed was that some of our more interesting and effective components---such as the rotation one---consistently emerged within the first several columns of $A$ across different initializations of $A$. 

 \begin{table}[t]
  {\small
    \centering
    \resizebox{1.0\linewidth}{!}{
 \aboverulesep=0ex
 \belowrulesep=0ex
 \renewcommand{\arraystretch}{1.2}
\begin{tabular}{lcccccccccc}
      \toprule
      \multirow{2}[3]{*}{Method} &
      \multicolumn{2}{c}{Edges$+$Shoes} &
      \multicolumn{2}{c}{CLEVR-Simple} &
      \multicolumn{2}{c}{CLEVR-Complex} &
      \multicolumn{2}{c}{CLEVR-U} &
      \multicolumn{2}{c}{CLEVR-1FOV} \\
      \cmidrule(lr){2-3}
      \cmidrule(lr){4-5}
      \cmidrule(lr){6-7}
      \cmidrule(lr){8-9}
      \cmidrule(lr){10-11}
      & $D_{\%}$ & $D_R$ & $D_{\%}$ & $D_R$ & $D_{\%}$ & $D_R$ &  $D_{\%}$ & $D_R$ &  $D_{\%}$ & $D_R$\\
      \midrule
      
InfoGAN & 63.2\% & 1.6 & 74.7\% & 2.2 & 66.0\% & 1.6 & 74.2\% & 2.2 & 89.9\% & 5.8\\

ProGAN+ & 64.9\% & 1.6 & 69.3\% & 1.5 & 63.8\% & 1.5 & 75.4\% & 2.2 & 78.7\% & 1.5\\

ProGAN & 66.0\% & 1.7 & 69.8\% & 1.6 & 63.8\% & 1.5 & 75.5\% & 2.3 & 79.3\% & 1.7\\

w/ HP & 80.8\% & \textbf{7.3} & 79.3\% & \textbf{6.0} & 82.7\% & 2.5 & 78.9\% & 2.0 & \textbf{92.8\%} & \textbf{61.3}\\

w/ HP FT & \textbf{81.0\%} & 4.5 & \textbf{85.7\%} & 5.6 & \textbf{85.4\%} & \textbf{3.0} & \textbf{81.8\%} & 2.2 & 87.9\% & 14.6\\
\bottomrule
\vspace{0.1pt}
 \end{tabular}
 }
 }
\caption{The empirical effect of the Hessian Penalty to strengthen the Hessian's diagonal. We compute the percentage of Hessian matrices whose max elements lies on the diagonal ($D_{\%}$). We also measure the ratio between the average magnitude of diagonal elements versus off-diagonal elements ($D_R$).} %
\label{tab:diags}
\vspace{-0.2in}

\end{table}

\subsection{What is the Hessian Penalty Actually Doing?}

The Hessian Penalty is designed to encourage the Hessian of generative models to be diagonal. Does this hold up in practice?

If the Hessian Penalty is working as expected, then the relative weight of the diagonal to the off-diagonal components of the Hessian should increase when we apply our method. We propose two ways to measure this. First, we generate 100 fake images at $128\times128$ resolution, and compute the Hessian matrix for each pixel in all of these images (estimated via second-order finite differences). For each of these Hessians, we measure two quantities: (1) if the largest element in that Hessian lies on the diagonal; (2) the relative magnitude between elements on the diagonal versus off-diagonal. For each quantity, we average results over all $100 \cdot 128 \cdot 128 \cdot 3 = 4,915,200$ Hessian matrices; see Table \ref{tab:diags}. Under these two metrics, we find that the Hessian Penalty always strengthens the diagonal, with the exception of CLEVR-U which has an underparameterized latent space and small $3\times3$ Hessians. As an aside, this is somewhat interesting since in all ProGAN experiments, we never explicitly regularize the Hessian of the pixels directly. We only regularize the Hessian of intermediate activations, three or more convolutions before pixels. We also present visualizations of the Hessian matrices themselves; see \refapp{app:vis}.

\section{Discussion}

In this paper, we presented the Hessian Penalty, a simple regularization function that encourages the Hessian matrix of deep generators to be diagonal. When applied to ProGAN trained on Edges$+$Shoes, our method is capable of disentangling several significant factors of variation, such as edges$\leftrightarrow$shoes, a component reminiscent of multimodal image-to-image translation~\cite{zhu2017toward,huang2018multimodal}. When trained on synthetic CLEVR datasets, our method can also uncover the known factors of variation while shrinking the overparameterized latent space. %
We also showed that our method discovers interesting factors such as rotation, colorization and zooming, in the latent space of BigGAN. 

\myparagraph{Limitations.} Although the Hessian Penalty works well for several datasets discussed above, our method does exhibit several notable failure modes.  First, given that the Hessian Penalty is such a weak prior, we cannot expect to obtain perfect disentanglement results. For example, on CLEVR-Complex, our method learns to control the color of both objects---which are independent in the dataset---with a single component. Second, when fine-tuning a generator's weights with our method, image quality can sometimes degrade. Third, computing the Hessian Penalty only at early layers in the network can lead to a degenerate solution where the generator substantially reduces the Hessian Penalty seemingly without any effects on disentanglement or latent space shrinkage. We found that this degeneracy can be mitigated by also applying the Hessian Penalty to deeper intermediate layers and immediately after normalization layers. Nonetheless, given the simplicity of our method, the Hessian Penalty could be a small step towards the grand goals outlined by Yoshua Bengio.

\myparagraph{Acknowledgments.} We thank Pieter Abbeel, Taesung Park, Richard Zhang, Mathieu Aubry, Ilija Radosavovic, Tim Brooks, Karttikeya Mangalam, and all of BAIR for valuable discussions and encouragement. This work was supported, in part, by grants from SAP, Adobe, and Berkeley DeepDrive.
\clearpage
\bibliographystyle{splncs04}
\bibliography{egbib}

\begin{thebibliography}{10}
\providecommand{\url}[1]{\texttt{#1}}
\providecommand{\urlprefix}{URL }
\providecommand{\doi}[1]{https://doi.org/#1}

\bibitem{avron2011randomized}
Avron, H., Toledo, S.: Randomized algorithms for estimating the trace of an
  implicit symmetric positive semi-definite matrix. Journal of the ACM (JACM)
  \textbf{58}(2),  1--34 (2011)

\bibitem{bau2019semantic}
Bau, D., Strobelt, H., Peebles, W., Wulff, J., Zhou, B., Zhu, J., Torralba, A.:
  Semantic photo manipulation with a generative image prior. SIGGRAPH  (2019)

\bibitem{bau2019gandissect}
Bau, D., Zhu, J.Y., Strobelt, H., Bolei, Z., Tenenbaum, J.B., Freeman, W.T.,
  Torralba, A.: Gan dissection: Visualizing and understanding generative
  adversarial networks. In: International Conference on Learning
  Representations (ICLR) (2019)

\bibitem{bengio2013deep}
Bengio, Y.: Deep learning of representations: Looking forward. In:
  International Conference on Statistical Language and Speech Processing. pp.
  1--37. Springer (2013)

\bibitem{brock2019large}
Brock, A., Donahue, J., Simonyan, K.: Large scale gan training for high
  fidelity natural image synthesis. In: International Conference on Learning
  Representations (ICLR) (2019)

\bibitem{chen2016infogan}
Chen, X., Duan, Y., Houthooft, R., Schulman, J., Sutskever, I., Abbeel, P.:
  Infogan: Interpretable representation learning by information maximizing
  generative adversarial nets. In: Advances in neural information processing
  systems. pp. 2172--2180 (2016)

\bibitem{collins2020editing}
Collins, E., Bala, R., Price, B., Susstrunk, S.: Editing in style: Uncovering
  the local semantics of gans. In: IEEE Conference on Computer Vision and
  Pattern Recognition (CVPR) (2020)

\bibitem{deng2009imagenet}
Deng, J., Dong, W., Socher, R., Li, L.J., Li, K., Fei-Fei, L.: Imagenet: A
  large-scale hierarchical image database. In: IEEE Conference on Computer
  Vision and Pattern Recognition (CVPR) (2009)

\bibitem{goodfellow2014generative}
Goodfellow, I., Pouget-Abadie, J., Mirza, M., Xu, B., Warde-Farley, D., Ozair,
  S., Courville, A., Bengio, Y.: Generative adversarial nets. In: NIPS (2014)

\bibitem{gulrajani2017improved}
Gulrajani, I., Ahmed, F., Arjovsky, M., Dumoulin, V., Courville, A.C.: Improved
  training of wasserstein gans. In: Advances in Neural Information Processing
  Systems (2017)

\bibitem{heusel2017gans}
Heusel, M., Ramsauer, H., Unterthiner, T., Nessler, B., Hochreiter, S.: Gans
  trained by a two time-scale update rule converge to a local nash equilibrium.
  In: Advances in Neural Information Processing Systems (2017)

\bibitem{DBLP:conf/iclr/HigginsMPBGBML17}
Higgins, I., Matthey, L., Pal, A., Burgess, C., Glorot, X., Botvinick, M.,
  Mohamed, S., Lerchner, A.: beta-vae: Learning basic visual concepts with a
  constrained variational framework. In: 5th International Conference on
  Learning Representations, {ICLR} 2017, Toulon, France, April 24-26, 2017,
  Conference Track Proceedings. OpenReview.net (2017),
  \url{https://openreview.net/forum?id=Sy2fzU9gl}

\bibitem{hu2018disentangling}
Hu, Q., Szab{\'o}, A., Portenier, T., Favaro, P., Zwicker, M.: Disentangling
  factors of variation by mixing them. In: IEEE Conference on Computer Vision
  and Pattern Recognition (CVPR) (2018)

\bibitem{huang2018multimodal}
Huang, X., Liu, M.Y., Belongie, S., Kautz, J.: Multimodal unsupervised
  image-to-image translation. European Conference on Computer Vision (ECCV)
  (2018)

\bibitem{hutchinson1989stochastic}
Hutchinson, M.F.: A stochastic estimator of the trace of the influence matrix
  for laplacian smoothing splines. Communications in Statistics-Simulation and
  Computation  \textbf{18}(3),  1059--1076 (1989)

\bibitem{hyvarinen2000independent}
Hyv{\"a}rinen, A., Oja, E.: Independent component analysis: algorithms and
  applications. Neural networks  \textbf{13}(4-5),  411--430 (2000)

\bibitem{isola2017image}
Isola, P., Zhu, J.Y., Zhou, T., Efros, A.A.: Image-to-image translation with
  conditional adversarial networks. In: IEEE Conference on Computer Vision and
  Pattern Recognition (CVPR) (2017)

\bibitem{jahanian2019steerability}
Jahanian, A., Chai, L., Isola, P.: On the''steerability" of generative
  adversarial networks. arXiv preprint arXiv:1907.07171  (2019)

\bibitem{jha2018disentangling}
Jha, A.H., Anand, S., Singh, M., Veeravasarapu, V.: Disentangling factors of
  variation with cycle-consistent variational auto-encoders. In: European
  Conference on Computer Vision (ECCV) (2018)

\bibitem{johnson2017clevr}
Johnson, J., Hariharan, B., van~der Maaten, L., Fei-Fei, L., Lawrence~Zitnick,
  C., Girshick, R.: Clevr: A diagnostic dataset for compositional language and
  elementary visual reasoning. In: Proceedings of the IEEE Conference on
  Computer Vision and Pattern Recognition. pp. 2901--2910 (2017)

\bibitem{karaletsos2015bayesian}
Karaletsos, T., Belongie, S., R{\"a}tsch, G.: Bayesian representation learning
  with oracle constraints. arXiv preprint arXiv:1506.05011  (2015)

\bibitem{karras2018progressive}
Karras, T., Aila, T., Laine, S., Lehtinen, J.: Progressive growing of gans for
  improved quality, stability, and variation. In: International Conference on
  Learning Representations (ICLR) (2018)

\bibitem{karras2019style}
Karras, T., Laine, S., Aila, T.: A style-based generator architecture for
  generative adversarial networks. In: IEEE Conference on Computer Vision and
  Pattern Recognition (CVPR) (2019)

\bibitem{Karras2019stylegan2}
Karras, T., Laine, S., Aittala, M., Hellsten, J., Lehtinen, J., Aila, T.:
  Analyzing and improving the image quality of {StyleGAN}. CoRR
  \textbf{abs/1912.04958} (2019)

\bibitem{khemakhem2020variational}
Khemakhem, I., Kingma, D., Monti, R., Hyvarinen, A.: Variational autoencoders
  and nonlinear ica: A unifying framework. In: International Conference on
  Artificial Intelligence and Statistics. pp. 2207--2217 (2020)

\bibitem{khemakhem2020ice}
Khemakhem, I., Monti, R.P., Kingma, D.P., Hyv{\"a}rinen, A.: Ice-beem:
  Identifiable conditional energy-based deep models. arXiv preprint
  arXiv:2002.11537  (2020)

\bibitem{kim2018disentangling}
Kim, H., Mnih, A.: Disentangling by factorising. arXiv preprint
  arXiv:1802.05983  (2018)

\bibitem{kingma2014adam}
Kingma, D., Ba, J.: Adam: A method for stochastic optimization. In:
  International Conference on Learning Representations (ICLR) (2015)

\bibitem{kingma2014auto}
Kingma, D.P., Welling, M.: Auto-encoding variational bayes. International
  Conference on Learning Representations (ICLR)  (2014)

\bibitem{kulkarni2015deep}
Kulkarni, T.D., Whitney, W.F., Kohli, P., Tenenbaum, J.: Deep convolutional
  inverse graphics network. In: Advances in neural information processing
  systems. pp. 2539--2547 (2015)

\bibitem{lecun2006tutorial}
LeCun, Y., Chopra, S., Hadsell, R., Ranzato, M., Huang, F.: A tutorial on
  energy-based learning. Predicting structured data  \textbf{1}(0) (2006)

\bibitem{locatello2019challenging}
Locatello, F., Bauer, S., Lucic, M., Raetsch, G., Gelly, S., Sch{\"o}lkopf, B.,
  Bachem, O.: Challenging common assumptions in the unsupervised learning of
  disentangled representations. In: International Conference on Machine
  Learning. pp. 4114--4124 (2019)

\bibitem{mathieu2016disentangling}
Mathieu, M.F., Zhao, J., Ramesh, A., Sprechmann, P., LeCun, Y.: Disentangling
  factors of variation in deep representation using adversarial training. In:
  Advances in Neural Information Processing Systems (2016)

\bibitem{moosavi2019robustness}
Moosavi-Dezfooli, S.M., Fawzi, A., Uesato, J., Frossard, P.: Robustness via
  curvature regularization, and vice versa. In: Proceedings of the IEEE
  Conference on Computer Vision and Pattern Recognition. pp. 9078--9086 (2019)

\bibitem{nguyen2019hologan}
Nguyen-Phuoc, T., Li, C., Theis, L., Richardt, C., Yang, Y.L.: Hologan:
  Unsupervised learning of 3d representations from natural images. In:
  Proceedings of the IEEE International Conference on Computer Vision. pp.
  7588--7597 (2019)

\bibitem{odena2018generator}
Odena, A., Buckman, J., Olsson, C., Brown, T.B., Olah, C., Raffel, C.,
  Goodfellow, I.: Is generator conditioning causally related to gan
  performance? arXiv preprint arXiv:1802.08768  (2018)

\bibitem{radford2015unsupervised}
Radford, A., Metz, L., Chintala, S.: Unsupervised representation learning with
  deep convolutional generative adversarial networks. In: International
  Conference on Learning Representations (ICLR) (2016)

\bibitem{ramesh2018spectral}
Ramesh, A., Choi, Y., LeCun, Y.: A spectral regularizer for unsupervised
  disentanglement. arXiv preprint arXiv:1812.01161  (2018)

\bibitem{shen2020interpreting}
Shen, Y., Gu, J., Tang, X., Zhou, B.: Interpreting the latent space of gans for
  semantic face editing. In: IEEE Conference on Computer Vision and Pattern
  Recognition (CVPR) (2020)

\bibitem{singh2019finegan}
Singh, K.K., Ojha, U., Lee, Y.J.: Finegan: Unsupervised hierarchical
  disentanglement for fine-grained object generation and discovery. In: IEEE
  Conference on Computer Vision and Pattern Recognition (CVPR) (2019)

\bibitem{voynov2020unsupervised}
Voynov, A., Babenko, A.: Unsupervised discovery of interpretable directions in
  the gan latent space (2020)

\bibitem{white2016sampling}
White, T.: Sampling generative networks. arXiv preprint arXiv:1609.04468
  (2016)

\bibitem{yang2019diversity}
Yang, D., Hong, S., Jang, Y., Zhao, T., Lee, H.: Diversity-sensitive
  conditional generative adversarial networks. arXiv preprint arXiv:1901.09024
  (2019)

\bibitem{finegrained_shoes}
Yu, A., Grauman, K.: {F}ine-{G}rained {V}isual {C}omparisons with {L}ocal
  {L}earning. In: CVPR (2014)

\bibitem{zhang2018perceptual}
Zhang, R., Isola, P., Efros, A.A., Shechtman, E., Wang, O.: The unreasonable
  effectiveness of deep features as a perceptual metric. In: CVPR (2018)

\bibitem{zhu2017unpaired}
Zhu, J.Y., Park, T., Isola, P., Efros, A.A.: Unpaired image-to-image
  translation using cycle-consistent adversarial networks. In: IEEE
  International Conference on Computer Vision (ICCV) (2017)

\bibitem{zhu2017toward}
Zhu, J.Y., Zhang, R., Pathak, D., Darrell, T., Efros, A.A., Wang, O.,
  Shechtman, E.: Toward multimodal image-to-image translation. In: Advances in
  Neural Information Processing Systems (2017)

\bibitem{zhu2018visual}
Zhu, J.Y., Zhang, Z., Zhang, C., Wu, J., Torralba, A., Tenenbaum, J., Freeman,
  B.: Visual object networks: Image generation with disentangled 3d
  representations. In: Advances in neural information processing systems. pp.
  118--129 (2018)

\end{thebibliography}

\appendix

\clearpage
\section{Implementation Details}
\lblsec{app:implementation}
Figure 1 in the main paper presents the basic implementation of the Hessian Penalty. Here, we include more implementation details for each experiment below:

\subsection{ProGAN Experiments}
\lblsec{app:progan}

When fine-tuning ProGAN, we find that it is critical to gradually phase-in the Hessian Penalty by ramping-up the loss weight at iteration $t$ (call it $\lambda_t$) over $T$ gradient steps:

\begin{equation}
    \lambda_t = \lambda\min\left(1, \frac{t}{T}\right)
\end{equation}

In our experiments, $T=10^5$ or $T=10^6$ tend to work well across datasets. Without this warm-up, the generator sometimes collapses with the additional regularization term imposed on it. Even with the linear warm-up, we commonly observe a sudden decrease in image quality (as well as a sudden spike in Fréchet Inception Distance (FID)) at the beginning of fine-tuning, usually followed by a quick recovery. Perceptual Path Length (PPL) also rapidly decreases at the start of fine-tuning. Increasing $T$ decreases the spikes in both FID and PPL. When training from scratch with the Hessian Penalty, such a warm-up is not needed.

We apply the Hessian Penalty to the generator's intermediate activations instead of its direct output in pixel space. Specifically, we regularize the first 10 convolutions in the network following pixel normalization. In all experiments, we set the finite differences hyperparameter $\epsilon = 0.1$ (used in \refeq{fd}) and sampled $k=2$ Rademacher vectors to compute the variance term in the Hessian Penalty. During fine-tuning, we also use the pre-trained weights of the discriminator. 

We set $\lambda=0.1$ for the Edges$+$Shoes, CLEVR-Simple and CLEVR-Complex Hessian Penalty experiments. We used $\lambda=0.025$ for the CLEVR-1FOV and CLEVR-U experiments because we found using $\lambda=0.1$ led to excessive degradation of image quality at the beginning of fine-tuning for these models.

For the InfoGAN baseline, we add a second head to the discriminator, as done in the original paper. We optimize a mean squared error between the head's output and a portion of the $z$ vector that generated the corresponding image. We use a loss weighting of $1.0$. To strengthen the InfoGAN baseline in the CLEVR experiments, we only regularize the first $c$ components of the $z$ vector, where $c$ is the true number of factors of variation. We keep all other hyperparameters identical.

\subsection{BigGAN Experiments}
\lblsec{app:biggan}
To learn the matrix of directions $A$, we optimize the Hessian Penalty in pixel space of $G(z+\eta  A w_i)$ w.r.t. $w_i$, where $w_i$ is a one at index $i$ and zeros elsewhere, $i$ is uniformly sampled and $\eta \sim \mathcal{U}[-5,5]$, as done in prior work~\cite{voynov2020unsupervised}. We also tried applying the Hessian Penalty to intermediate activations of $G$, but we did not observe a significant difference in results. We use a mean in place of max as the reduction operation for the Hessian Penalty, but we did not ablate the choice. We restrict $A$ to be orthogonal by applying Gram-Schmidt at the beginning of every forward pass. We use Adam~\cite{kingma2014adam} with default parameters and a learning rate of $0.01$.

\subsection{Avoiding Degenerate Solutions}
\lblsec{app:avoid}
In certain cases, applying the Hessian Penalty immediately following a linear operation (such as a convolution) could lead to a degenerate solution where the filter's norm decreased substantially without achieving any disentanglement. This is why we tend to apply the Hessian Penalty after normalization layers. We note that this is only a potential problem when applying the Hessian Penalty to intermediate activations; it is fine to regularize immediately following a linear function if it is the output of the network.

\section{Proof of Theorem~\ref{eq:approx_thm}}
\lblsec{app:proof}
Let $\mathcal{L}_{H} \triangleq \Var(x^\intercal H x)/2$. In practice, we estimate this variance with a finite number of Rademacher vectors $x$. We now prove that $\mathcal{L}_{H}$ is an unbiased estimator of $\sum_{i \neq j} H_{ij}^2$.
\allowdisplaybreaks[4]
\begin{align*}
\Var(x^\intercal H x) &= \Var \left(\sum_{i,j}  H_{ij} x_i x_j \right) \\
&= \Var \left( \sum_i H_{ii} x_i^2 + \sum_{i \neq j}  H_{ij} x_i x_j \right) \\
&= \Var \left( \underbrace{\sum_i H_{ii}}_{\text{constant}} + \sum_{i \neq j}  H_{ij} x_i x_j \right) \\
&= \Var \left(\sum_{i \neq j}  H_{ij} x_i x_j \right)\\
&= \Var \left(2 \sum_{i < j}  H_{ij} x_i x_j \right) \\
&= 4 \Var \left(\sum_{i < j}  H_{ij} x_i x_j \right) \\
&= 4 \Cov \left(\sum_{i < j}  H_{ij} x_i x_j,\sum_{i < j}  H_{ij} x_i x_j \right) \\
&= 4 \sum_{i<j \text{ and } k < \ell} \Cov \left( H_{ij} x_i x_j, H_{k\ell} x_k x_\ell \right) \\
&= 4 \sum_{i<j \text{ and } k < \ell \text{ where } i=k \text{ and } j=\ell } \Cov \left( H_{ij} x_i x_j, H_{k\ell} x_k x_\ell \right)\\ %
&= 4 \sum_{i<j} \Cov \left( H_{ij} x_i x_j, H_{ij} x_i x_j \right)  \\
&= 4 \sum_{i<j} \Var \left( H_{ij} x_i x_j \right)  \\
&= 4 \sum_{i<j}  H_{ij}^2 \Var \left(x_i x_j \right)  \\
&= 4 \sum_{i<j}  H_{ij}^2 \cdot 1  \\
&= 4 \sum_{i<j} H_{ij}^2 \\
&= 2 \sum_{i \neq j} H_{ij}^2
\end{align*}
\allowdisplaybreaks[0]

The above is true if we can show that $i \neq k$ or $j \neq \ell$ implies that
\[
\Cov(H_{ij} x_i x_j, H_{k\ell} x_k x_\ell) =0.
\]

We can do this by noting
\begin{align*}
\Cov(H_{ij} x_i x_j, H_{k\ell} x_k x_\ell) &= H_{ij} H_{k\ell} \Cov(x_i x_j, x_k x_\ell)
\end{align*}

If $i \neq k$ or $j \neq \ell$ than $x_i x_j$ and $x_k x_\ell$ are independent and the covariance of two independent random variables is always $0$. The reason they are independent is that when $i \neq k$ or $j \neq \ell$, it must be the case that either $k \neq i,j$ or $\ell \neq i,j$. In other words, there is a random variable present in the second product that is not present in the first. Since all the entries of $x$ are independent and $\pm 1$, the presence of this random variable in the second product causes it to be independent of the first. In other words, regardless of the value of $x_i x_j$, the presence of some $x_z$ for $z \neq i,j$ in the second product randomizes whether the second product is plus or minus one, independently of the value of $x_i x_j$. Thus, the distribution of $x_k x_\ell$ does not change even if we condition on the value of $x_i x_j$, which is one of the equivalent definitions of independence for random variables. \qedsymbol

\section{Visualizing Pixel Hessians}
\lblsec{app:vis}
One way to visualize the Hessian matrices of $G$ is to select a single, scalar pixel in $G$'s output and compute its Hessian matrix with second-order finite differences. For each dataset below, we sample six images per method and show the Hessians of the pixels with the largest Hessian Penalty in each image.

\begin{figure*}
\begin{center}
\includegraphics[width=\linewidth]{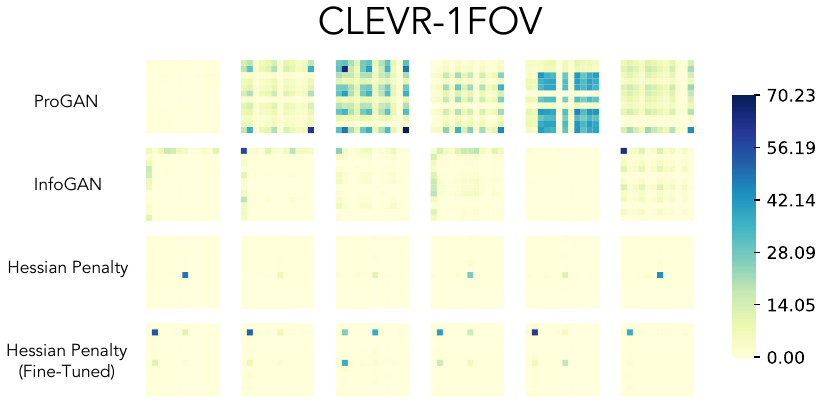}
\caption{Visualizing the $12\times12$ Hessian matrices of CLEVR-1FOV ($|z| = 12$). We compare baseline ProGAN, InfoGAN, training from scratch with Hessian Penalty, and fine-tuning with Hessian Penalty.}
\end{center}
\end{figure*}
\begin{figure*}
\begin{center}
\includegraphics[width=\linewidth]{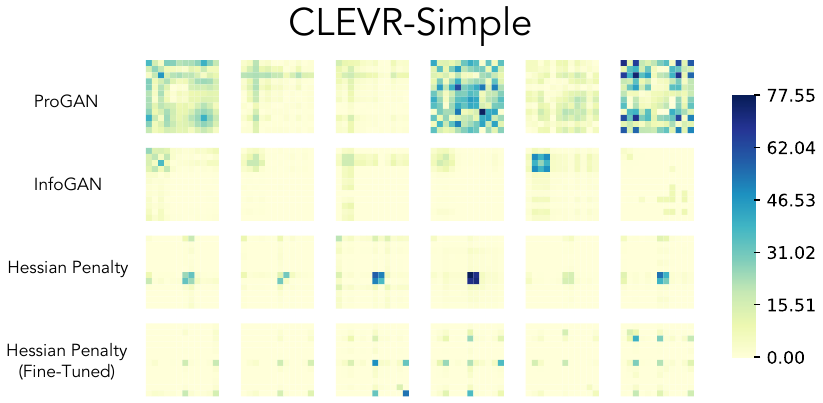}
\caption{Visualizing the $12\times12$ Hessian matrices of CLEVR-Simple ($|z| = 12$). We compare baseline ProGAN, InfoGAN, training from scratch with Hessian Penalty, and fine-tuning with Hessian Penalty.}
\end{center}
\end{figure*}
\begin{figure*}
\begin{center}
\includegraphics[width=\linewidth]{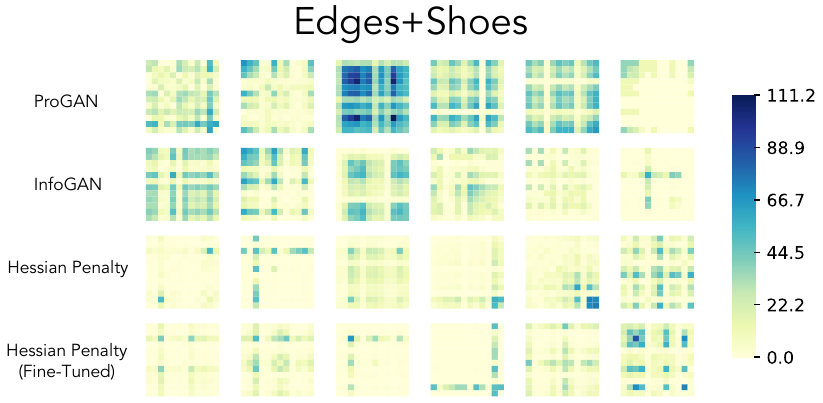}
\caption{Visualizing the $12\times12$ Hessian matrices of Edges$+$Shoes ($|z| = 12$). We compare baseline ProGAN, InfoGAN, training from scratch with Hessian Penalty, and fine-tuning with Hessian Penalty.}
\end{center}
\end{figure*}
\begin{figure*}
\begin{center}
\includegraphics[width=\linewidth]{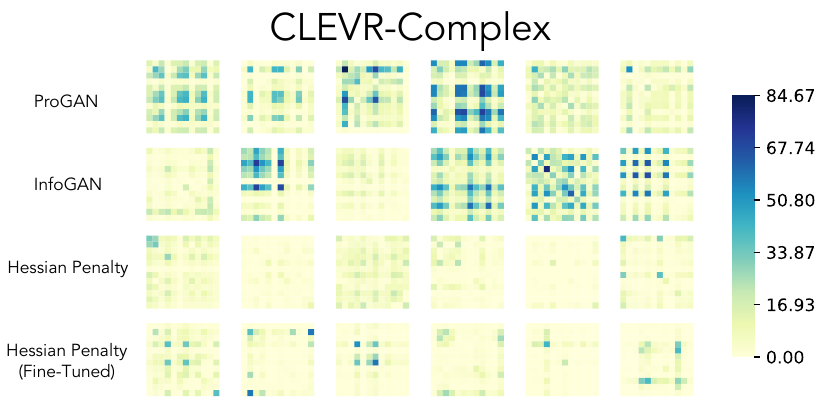}
\caption{Visualizing the $12\times12$ Hessian matrices of CLEVR-Complex ($|z| = 12$). We compare baseline ProGAN, InfoGAN, training from scratch with Hessian Penalty, and fine-tuning with Hessian Penalty.}
\end{center}
\end{figure*}
\begin{figure*}
\begin{center}
\includegraphics[width=\linewidth]{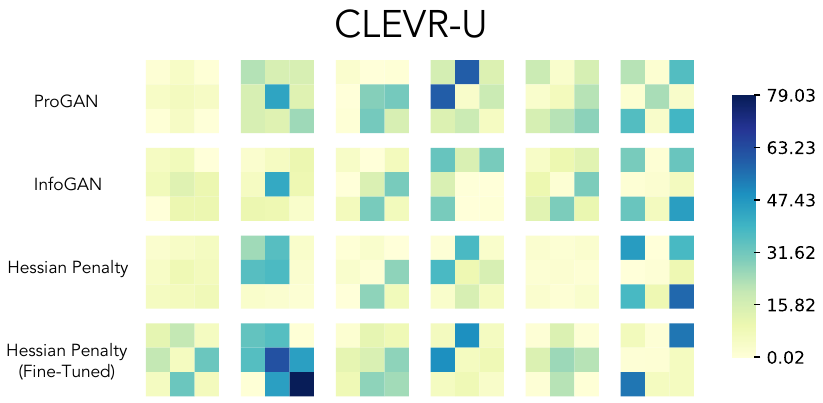}
\caption{Visualizing the $3\times3$ Hessian matrices of CLEVR-U ($|z| = 3$). We compare baseline ProGAN, InfoGAN, training from scratch with Hessian Penalty, and fine-tuning with Hessian Penalty.}
\end{center}
\end{figure*}

\section{Hessian Penalty Heatmaps}

For each image below, we overlay a heatmap of per-pixel Hessian Penalties. These Hessian Penalties were computed in pixel space.

\begin{figure*}[!]
\begin{center}
\includegraphics[width=\linewidth]{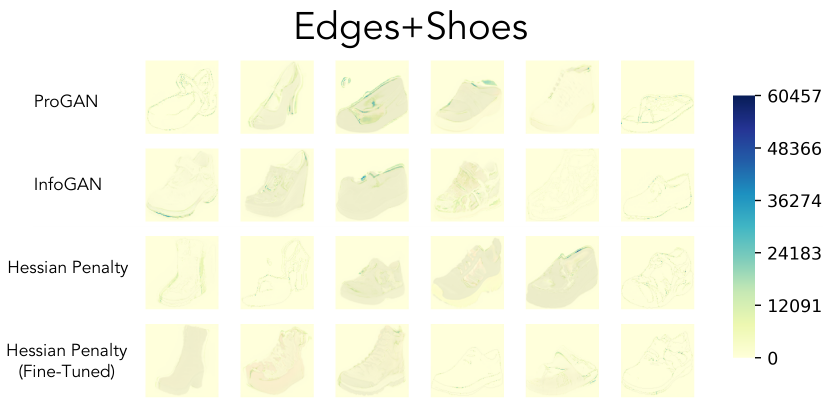}
\caption{Visualizing per-pixel values of the Hessian Penalty in Edges$+$Shoes models.}
\end{center}
\end{figure*}

\end{document}